\newtheorem{thm}{Theorem}
\newtheorem{lmm}{Lemma}
\newtheorem{crl}{Corollary}
\newcommand{\E}{\ensuremath{\mathop{\mathbb{E}}}}
\title{A Renewal Model of Intrusion}
\author{David Tolpin, dvd@offtopia.net}
\begin{document}

\maketitle

\begin{abstract}
    We present a probabilistic model of an intrusion in a 
    renewal process. Given a process and a sequence of events,
    an intrusion is a subsequence of events that is not produced
    by the process. Applications of the model are, for example,
    online payment fraud with the fraudster taking over a user's
    account and performing payments on the user's behalf, or
    unexpected equipment failures due to unintended use. 

    We adopt Bayesian approach to infer the probability of an
    intrusion in a sequence of events, a MAP subsequence of 
    events constituting the intrusion, and the marginal
    probability of each event in a sequence to belong to the
    intrusion. We evaluate the model for intrusion detection on
    synthetic data and on anonymized data from an online
    payment system.
\end{abstract}

\section{Overview}

We approach the problem of distinguishing between `native'
events and intrusions in an event stream arriving over time.
This problem arises in multiple applications. Consider, for
example, an online payment service where the users connect with
their credentials and pay for goods or services. A thief can
illegally obtain access to another user's account and steal
money by sending payments on behalf of the legitimate user.  Is
there a way to identify illegal payments by looking at a
sequence of payments even if each individual payment looks
legitimate?

We turn to the \textit{renewal process} as the basis for
the probabilistic generative model of the problem. Renewal
processes~\cite{C70} are used to model arrival of events where hold times
between events are independently distributed. We assume that
regular events come from a renewal process with known
parameters, which can be given or inferred. We then
consider a sequence of recent events and reason about the
likelihood that some of the events are `foreign' rather than
belong to the process.  We adopt the Bayesian approach to infer
the probability of an intrusion in a given event sequence, a
maximum \textit{a posteriori} probability (MAP) subsequence of
events constituting the intrusion, and the marginal probability
of each event to belong to the intrusion.

We show that the inference can be performed in polynomial time.
We implement the inference algorithms and evaluate the inference
on synthetic data and on anonymized data from an online
payment system. 

\subsection*{Contributions}

The paper brings the following contributions:

\begin{itemize}
    \item A probabilistic generative model for inference about
        intrusions in renewal processes.
    \item Polynomial-time algorithms for computing the
        probability of an intrusion, a MAP subsequence of events
        constituting the intrusion, and the marginal probability
        of each event to belong to the intrusion.
    \item An evaluation of applicability of the algorithms to
        intrusion detection in online payment systems.
\end{itemize}

\section{Related Work}

The problem of detecting an intrusion in a sequence of events
belongs to the field of anomaly (or novelty) detection. An
extensive review of novelty detection  in general is provided
in~\cite{MS03,PCC+14}. Anomaly detection in discrete sequences
is reviewed in~\cite{CBK12}, and in temporal data
in~\cite{GGA+14}.

A generative probabilistic model~\cite{E84} is used to reason
about intrusion probabilities. Much of the recent fundamental
and applied research on unsupervised learning in general and
anomaly detection in particular involves generative
probabilistic models~\cite{RG99,PER00,XPS+11,XPS11}.

A renewal process is a discrete stochastic
process~\cite{G14}. Discrete stochastic processes arise as
models in many applications~\cite{S09,G14,SP13}. Depending on the
nature of the phenomenon being modelled, different discrete
stochastic processes are used, such as Poisson
processes~\cite{LM03,SJ10}, Cox processes~\cite{L98},
interacting point processes and in particular Hawkes
processes~\cite{O88,CM12}, Markov processes~\cite{Y20}, and
other variations~\cite{WP96,TA06}.

The present work differs from earlier research in the following
aspects:
\begin{enumerate}[a)]
    \item A specific type of novelty, namely an intrusion, is
        considered. The sequence of events is viewed as a mixture
        of normal activity and an intrusion.
    \item The generative model is used to predict both the
        probability of an intrusion  and the marginal
        probability of each event to belong to the intrusion
        (rather than just the probability of an intrusion).
    \item No prior assumption is made about the stochastic
        process realised by the intrusion.
\end{enumerate}

\section{Preliminaries}

A renewal process~\cite{C70}\cite[Chapter~5]{G14} is a
generalization of the Poisson point process.
In a renewal process, the interarrival intervals $\Delta t$ are
non-negative, independent, and identically distributed
random variables. A renewal process can be characterized in
several ways --- by the distributions of either arrival times,
interarrival intervals, or the number of arrivals during a unit
time interval. In this paper, we characterize renewal processes
by the distribution of interarrival intervals. We write
\begin{equation}
    \Delta t \sim F(\theta_F)
    \label{eq:renewal-process}
\end{equation}
to describe a renewal process with interarrival intervals
drawn from distribution $F$ with parameter $\theta_F$. For
example, the Poisson process is a renewal process with
exponentially distributed interarrival intervals:
\begin{equation}
    \Delta t_{Poisson} \sim \mathrm{Exponential}(\lambda)
    \label{eq:poisson-process}
\end{equation}
Renewal processes are used as a simple model for systems that
repeatedly return to a state probabilistically equivalent to the
initial state. 

\section{Probabilistic Generative Model of Intrusion}
\label{sec:probmod}

In the problem of \textit{intrusion detection in a
renewal process} we are \textbf{given:}

\begin{itemize}
        \item A renewal process
            \begin{equation*}
                \Delta t \sim F(\theta_F)
            \end{equation*}
        \item A prior probability $p_\epsilon$ that an individual
            event in the sequence belongs to the intrusion.
        \item A time interval $[t_s, t_e]$, of duration $T = t_e - t_s$.
        \item A sequence $S$ of $N$ events $\{t_1, t_2, \dots, t_N\}$
            within the time interval, i.e. $t_s \le t_1 \le t_2 \le
            \dots \le t_N \le t_e$.
\end{itemize}
Based on this, we \textbf{need to determine:}
\begin{enumerate}
    \item The posterior probability of an intrusion in the
        sequence.
    \item Maximum \textit{a posteriori} subsequence $I_{MAP}$ of events
        constituting the intrusion.
    \item The marginal probability of each event to belong to
        the intrusion.
\end{enumerate}

To solve the problem, we construct a generative model that
produces a sequence of events, taking the possibility of an
intrusion into account, and then perform posterior inference on
the model. There are two essential observations~\cite{C70}:
\begin{enumerate}
    \item A renewal process is infinite in both directions.
    \item The probability density of the interarrival
        interval of a renewal process is fully determined by
        the time interval passed from the last event.
\end{enumerate}
Based on these observations, just two more events --- $t_0$ and 
before the first event and $t_{N+1}$ after the last event in the sequence ---
fully define the context of the given sequence of events; also,
the times can be shifted arbitrarily by the same offset, e.g. so
that the earliest event takes place at time 0, $t_0 = 0$. Hence,
the generative model must draw the number $K$ of events
belonging to the intrusion, $0 \le K \le N$, and then generate
$N - K + 2$ events from the renewal process, starting with an
event at time 0 (Algorithm~\ref{alg:parametric}).

\captionof{algorithm}{Generative model of an intrusion in a
                      renewal process}
\label{alg:parametric}
\begin{algorithmic}[1]
    \STATE $K \sim \mathrm{Binomial}(N, p_\epsilon)$
    \STATE $t_0 \gets 0$
    \FOR {$i = 1$ \textbf{to} $N - K$}
        \STATE $\Delta t_i \sim F(\theta_F)$
        \STATE $t_i \gets t_{i-1} + \Delta t_i$
    \ENDFOR
    \STATE $\Delta t_{N - K + 1} \sim F(\theta_F)$
    \STATE $t_{N - K + 1} \gets t_{N - K} + \Delta t_{N - K + 1}$
\end{algorithmic}

The model implies that the renewal process is fully known.  
Section~\ref{sec:param} discusses ways to
estimate the process parameter $\theta_F$.

\section{Posterior Inference}
\label{sec:posterior}

In what follows,  $f(\cdot)$ stands for the probability density of
distribution $F$, $\widetilde \Pr(\cdot)$ stands for unnormalized
probability. For brevity, we drop explicit conditioning of
probabilities on problem parameters. The proofs are provided in
the supplementary material.

\subsection{Probability of Intrusion Subsequence}
\label{sec:prob-i}

The posterior inference in subsections~\ref{sec:post-map}
and~\ref{sec:post-ip} involves computing the unnormalized probability
of intrusion subsequence $I$ given $S$.

\begin{lmm} Let $\{k_1, k_2, \dots, k_{N - K}\}$ be the
    indices of events in $S \setminus I$, as produced by
    Algorithm~\ref{alg:parametric} (for $N=K$, let us
    set $k_1 = N+1$, $k_{N-K} = k_0 = 0$). Then (see
    Figure~\ref{fig:prob-i}),

    \begin{align}
        \label{eqn:prob-i}
         \widetilde \Pr&(I| S) = P_{k_1} \!\!\!\!\! \prod_{j=1}^{N-K-1}\!\!\!\!\!Q_{k_j, k_{j+1}} R_{k_{N-K}}, \\ \nonumber
        \mbox{where}& \\ \nonumber
         P_k &={\E_{\tau \sim F}\left[f(\tau)\right]}^{-1} \cdot
        \begin{cases}
            p_\epsilon^N   \frac {\int_{T}^\infty (\tau - T)f(\tau)^2 d\tau} {\int_{T}^\infty (\tau - T)f(\tau) d\tau}  & \mathbf{if}\; k = N + 1, \\
            p_\epsilon^{k - 1} \frac {\int_{t_k - t_s}^\infty f(\tau)^2d\tau} {\int_{t_k - t_s}^\infty f(\tau)d\tau} & \mathbf{otherwise.}
        \end{cases} \\ \nonumber
         Q_{k_1,k_2}& = {\E_{\tau \sim F}\left[f(\tau)\right]}^{-1} \cdot (1-p_\epsilon)p_\epsilon^{k_2 - k_1 - 1} {f(t_{k_2} - t_{k_1})}, \\ \nonumber
         R_k& = {\E_{\tau \sim F}\left[f(\tau)\right]}^{-1} \cdot 
        \begin{cases}
            {\E_{\tau \sim F}\left[f(\tau)\right]} & \mathbf{if}\; k = 0, \\
            (1-p_\epsilon)p_\epsilon^{N - k} 
            \frac {\int_{t_e - t_k}^\infty f(\tau)^2d\tau}{\int_{t_e - t_k}^\infty f(\tau)d\tau} & \mathbf{otherwise.}
        \end{cases}
    \end{align}
    \label{lmm:prob-i}

    Factors $P_k$, $Q_{k1, k2}$, and $R_k$ (Figure~\ref{fig:prob-i}) correspond to transitions in the generative model (Algorithm~\ref{alg:parametric})
    \begin{itemize}
        \item from the extra event at the beginning to the first event in $S \setminus I$,
        \item between events within $S \setminus I$, and
        \item from the last event in $S \setminus I$ to the extra
            event at the end.
    \end{itemize}
\end{lmm}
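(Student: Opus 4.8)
The plan is to compute $\widetilde\Pr(I\mid S)$ directly from the generative model (Algorithm~\ref{alg:parametric}) by writing the joint probability of the data and the intrusion indicator and then marginalizing the latent variables. First I would apply Bayes' rule, $\widetilde\Pr(I\mid S)\propto \Pr(I)\,\Pr(S\mid I)$, and split the two factors. The prior that a \emph{specific} subsequence of size $K=|I|$ is the intrusion is $p_\epsilon^{K}(1-p_\epsilon)^{N-K}$ (each event is independently foreign with probability $p_\epsilon$, and the binomial in the algorithm is the induced law of the count). The likelihood $\Pr(S\mid I)$ is the renewal density of the native subsequence $S\setminus I$ only, since the foreign events carry no generative density. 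The second ingredient is to make the two context events explicit: a latent event $t_0\le t_s$ before the window and a latent event $t_{N+1}\ge t_e$ after it, chosen so that $t_0,t_{k_1},\dots,t_{k_{N-K}},t_{N+1}$ are \emph{consecutive} events of the renewal process with i.i.d.\ gaps drawn from $F$.

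Next I would exploit the renewal (memoryless-in-the-last-event) property to factor the native likelihood over the $N-K+1$ intervals separating these $N-K+2$ events, writing each interval's contribution as a relative interarrival density $f(\Delta t)/\E_{\tau \sim F}\left[f(\tau)\right]$; the product of the per-interval normalizers $\E_{\tau \sim F}\left[f(\tau)\right]^{-1}$ is exactly the common prefactor appearing in $P_k$, $Q_{k_1,k_2}$ and $R_k$, and it is what makes the weights of intrusions of different sizes (hence different interval counts) comparable. For the $N-K-1$ interior intervals the gap $t_{k_{j+1}}-t_{k_j}$ is fixed by the data, which immediately yields the factor $Q_{k_j,k_{j+1}}$ once the $p_\epsilon^{\,k_{j+1}-k_j-1}(1-p_\epsilon)$ contributed by the $k_{j+1}-k_j-1$ intervening foreign events and the single native endpoint is attached.

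The substantive part is the two boundary intervals, whose gaps are latent and known only to exceed $x_1=t_{k_1}-t_s$ (respectively $x_2=t_e-t_{k_{N-K}}$). Here I would marginalize the hidden gap against its true conditional law on $\{\tau>x\}$, which has density $f(\tau)/\int_{x}^{\infty} f$, and average the relative density over it; this produces the conditional expectation $\E_{\tau \sim F}\left[f(\tau)\right]^{-1}\int_{x}^{\infty} f(\tau)^2\,d\tau\big/\int_{x}^{\infty} f(\tau)\,d\tau$, i.e.\ precisely the non-trivial branches of $P_{k_1}$ and $R_{k_{N-K}}$. I would treat the degenerate case $N=K$ (no native events) separately: a single interval must then straddle the entire window of length $T$, so one also marginalizes the window's offset inside that interval, contributing the extra length factor $\tau-T$ and giving the $k=N+1$ branch of $P$ and the $R_0$ branch (which collapses to $1$). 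Finally I would collect all powers of $p_\epsilon$ and $1-p_\epsilon$ across $P_{k_1}$, the $Q$'s and $R_{k_{N-K}}$ and check they telescope to $p_\epsilon^{K}(1-p_\epsilon)^{N-K}$, recovering the prior. The main obstacle I anticipate is justifying the boundary marginalization rigorously --- getting the correct reference measure on the latent context events and the window offset, and thereby explaining why the boundary intervals enter through the $f$-reweighted (collision) form $\int_x^\infty f^2$ rather than the naive survival $\int_x^\infty f$; everything else is bookkeeping on top of the renewal factorization.
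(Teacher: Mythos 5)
Your proposal is correct and takes essentially the same route as the paper's proof: the same per-transition factorization with the prior bookkeeping telescoping to $p_\epsilon^{K}(1-p_\epsilon)^{N-K}$, the same conditional-expectation treatment of the latent boundary gaps yielding the $\int_x^\infty f(\tau)^2 d\tau / \int_x^\infty f(\tau) d\tau$ factors, and the same length-biased ($\tau - T$, via the observation paradox) marginalization of the covering interval in the $K=N$ case. The only cosmetic difference is the normalization bookkeeping --- you divide each of the $N-K+1$ real intervals by $\E_{\tau \sim F}[f(\tau)]$ to make hypotheses of different sizes comparable, whereas the paper pads $S$ with $K$ unobserved non-intrusion events of expected density $\E_{\tau \sim F}[f(\tau)]$ each and then rescales the whole expression by $\E_{\tau \sim F}[f(\tau)]^{-(N+1)}$ --- which is the identical computation, and the rigor gap you flag at the boundary marginalization is likewise present (and not further justified) in the paper.
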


\begin{figure}[t]
    \begin{tikzpicture}[->,every node/.style={transform shape},auto,node distance=36pt]
        \tikzstyle{event}=[shape=circle]
        \tikzstyle{process}=[event,fill=darkgray,draw=black,text=white]
        \tikzstyle{extra}=[event,fill=lightgray,draw=darkgray,style=dashed,text=black]
        \tikzstyle{intrusion}=[event,fill=red,draw=black,text=black]

        \node[extra] (e0) {$e_0$};
        \node[intrusion] (e1) [right of=e0] {$e_1$};
        \node[intrusion] (e2) [right of=e1] {$e_2$};
        \node[process] (e3) [right of=e2] {$e_3$};
        \node[intrusion] (e4) [right of=e3] {$e_4$};
        \node[process] (e5) [right of=e4] {$e_5$};
        \node[intrusion] (e6) [right of=e5] {$e_6$};
        \node[intrusion] (e7) [right of=e6] {$e_7$};
        \node[process] (e8) [right of=e7] {$e_8$};
        \node[extra] (e9) [right of=e8] {$e_9$};
        \path (e0) edge [bend right] node {$P_3$} (e3);
        \path (e3) edge [bend left] node {$Q_{3,5}$} (e5);
        \path (e5) edge [bend right] node {$Q_{5,8}$} (e8);
        \path (e8) edge [bend left] node {$R_8$} (e9);
    \end{tikzpicture}
    \caption{The probability of an intrusion subsequence. Process events ($S
        \setminus I$) are \textbf{black}, intrusion events ($I$) are
        \textcolor{red}{\textbf{red}}, extra
        events at the beginning and at the end are
        \textcolor{gray}{\textbf{gray}} and dashed.}
    \label{fig:prob-i}
\end{figure}
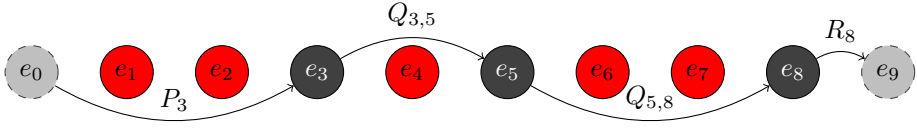

\subsection{Maximum \textit{a Posteriori} Subsequence}
\label{sec:post-map}

In (\ref{eqn:prob-i}) each of factors $P_k$, $Q_{k_1, k_2}$, $R_k$
(Equation~\ref{eqn:prob-i}) is independent of the rest of events
given two adjacent process events. Therefore, finding a
MAP subsequence of intrusion events can be formulated as the
shortest path problem in a directed acyclic graph.

\begin{thm} Let us construct a weighted directed acyclic graph
    $G=(V, E)$ where the set of vertices $V$ is the set of
    events $\{e_0, e_1, \cdots, e_{N+1}\}$, and the set of edges
    $E$ contains a weighted edge for each pair of vertices, from the smaller
    to the greater index: $\{(e_{k_1}, e_{k_2}), w_{k_1, k_2}
    \,|\, k_2 > k_1\}$. The edge weights $w_{k_1, k_2}$ are:
    \begin{equation}
        w_{k_1, k_2} = 
        \begin{cases}
            - \log P_{k_2}&\mathbf{if}\;k_1=0, \\
            - \log R_{k_1}&\mathbf{if}\;k_2=N+1, \\
            - \log Q_{k_1,k_2}&\mathbf{otherwise.}
        \end{cases}
        \label{eqn:post-map-weights}
    \end{equation}
    
    Then, a MAP subsequence of events $I_{MAP}$ is the set of
    events in a shortest path from $e_0$ to $e_{N+1}$ with
    the extra events $\{e_0, e_{N+1}\}$ removed.
    \label{thm:post-map}
\end{thm}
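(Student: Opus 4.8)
The plan is to exhibit a weight-preserving bijection between intrusion subsequences $I \subseteq S$ and directed paths from $e_0$ to $e_{N+1}$ in $G$, and then argue that minimizing path weight is equivalent to maximizing the posterior $\Pr(I \mid S)$.

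First I would set up the correspondence. An intrusion subsequence $I$ is determined by its complement, the set of process events $S \setminus I$, whose indices I write as $k_1 < k_2 < \cdots < k_{N-K}$. Prepending $e_0$ and appending $e_{N+1}$ yields the vertex sequence $e_0, e_{k_1}, \ldots, e_{k_{N-K}}, e_{N+1}$. Because $G$ contains an edge from $e_{k}$ to $e_{k'}$ for every pair $k < k'$, and because the indices $k_j$ are strictly increasing, consecutive vertices in this sequence are always joined by an edge; hence it is a valid directed path from $e_0$ to $e_{N+1}$. Conversely, any such path visits vertices of strictly increasing index (the graph is acyclic with edges directed from smaller to larger index), so deleting $e_0$ and $e_{N+1}$ recovers a well-defined set of process events and hence an intrusion subsequence. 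This establishes the bijection, and I would treat the degenerate case $N = K$ explicitly: here $S \setminus I$ is empty and the corresponding path is the single edge $(e_0, e_{N+1})$, matching the convention $k_1 = N+1$, $k_{N-K} = k_0 = 0$ of Lemma~\ref{lmm:prob-i}.

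Next I would verify that the bijection preserves weights, in the sense that the total weight of a path equals $-\log \widetilde \Pr(I \mid S)$. By the edge-weight definition~(\ref{eqn:post-map-weights}), the initial edge $(e_0, e_{k_1})$ contributes $-\log P_{k_1}$, each interior edge $(e_{k_j}, e_{k_{j+1}})$ contributes $-\log Q_{k_j, k_{j+1}}$, and the final edge $(e_{k_{N-K}}, e_{N+1})$ contributes $-\log R_{k_{N-K}}$. Summing along the path and using additivity of the logarithm turns the product in~(\ref{eqn:prob-i}) into a sum, so the path weight is exactly $-\log \widetilde \Pr(I \mid S)$. The one point requiring care is the direct edge $(e_0, e_{N+1})$, for which both the $k_1 = 0$ and $k_2 = N+1$ cases of~(\ref{eqn:post-map-weights}) nominally apply; I would read the cases in the stated order, giving weight $-\log P_{N+1}$, and check that this agrees with $-\log(P_{N+1} R_0)$ since $R_0 = 1$ by the lemma, keeping the degenerate case consistent.

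Finally I would close the argument. Since $-\log(\cdot)$ is strictly decreasing, the path of minimum total weight corresponds to the intrusion subsequence of maximum $\widetilde \Pr(I \mid S)$; and because the posterior $\Pr(I \mid S)$ differs from $\widetilde \Pr(I \mid S)$ only by a normalizing constant independent of $I$, this subsequence also maximizes the posterior, i.e. it is $I_{MAP}$. Removing the extra events $\{e_0, e_{N+1}\}$ from the shortest path therefore yields $I_{MAP}$, as claimed. I expect the main obstacle to be the bookkeeping around the bijection rather than any deep difficulty: I must confirm that every choice of process indices gives a legal path (guaranteed by the complete set of forward edges) and, conversely, that the factorization of the lemma lines up term-for-term with the edges of the corresponding path, including the empty-product and direct-edge degenerate cases.
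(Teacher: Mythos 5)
Your proposal is correct and follows essentially the same route as the paper's proof: the factorization of Lemma~\ref{lmm:prob-i} turns into an additive path length under $-\log$, and monotonicity of the logarithm makes the shortest path correspond to the maximizer of $\widetilde \Pr(I \mid S)$. You simply make explicit what the paper leaves implicit --- the bijection between intrusion subsequences and $e_0$-to-$e_{N+1}$ paths, and the consistency of the degenerate all-intrusion case via $R_0 = 1$ --- which is sound bookkeeping, not a different argument.
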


Theorem~\ref{thm:post-map} implies that a MAP subsequence can be
computed in time $\Theta(N^2)$.

\subsection{Intrusion Probabilities}
\label{sec:post-ip}

The probability of an intrusion $\Pr(I \ne \emptyset|S)$ and the
marginal probability $\Pr(e_k  \in I | S)$ of each event
$e_k$, $\forall k \in \{1, \dots, N\}$, to belong to the intrusion
can be computed in polynomial time.  We first present an
algorithm for computing the posterior probability of intrusion.
Then, we show how the same algorithm can be generalized to also
compute the marginal probability of any given event to belong to
the intrusion.  Finally, we introduce an algorithm for computing
simultaneously, and hence more efficiently, the probability of
an intrusion and the marginal probability of each event in $S$
to belong to the intrusion.

The algorithms compute the probabilities according
to~(\ref{eqn:post-ip-bayes-ip}) and~(\ref{eqn:post-ip-bayes-mp}):
\begin{align}
    \label{eqn:post-ip-bayes-ip}
    \!\!\!\Pr(I \ne \emptyset | S) & = \!\! 1\! - \! \Pr(I = \emptyset | S) \!\! = \!\! 1\! -\! \frac {\widetilde \Pr(I = \emptyset|S)} {\widetilde \Pr(I \in 2^S | S)}  \\
    \label{eqn:post-ip-bayes-mp}
    \!\!\!\Pr(e_k\! \in\! I | S)  & = \!\! 1\! -\! \Pr(e_k\! \notin\! I | S)\!\!   = \!\!  1\! - \! \frac {\widetilde \Pr(e_k \!\notin\! I|S)} {\widetilde \Pr(I \in 2^S | S)} 
\end{align}

Both equations involve computing the unnormalized marginal likelihood $\widetilde \Pr(I \in 2^S | S)$.  Lemma~\ref{lmm:post-ip-ml-alg} gives an algorithm for computing $\widetilde \Pr(I \in 2^S | S)$ in polynomial time.

\begin{lmm}
    Algorithm~\ref{alg:post-ip-ml} computes $\widetilde \Pr(I \in 2^S | S)$ in time
    $\Theta(N^2)$. 
    \label{lmm:post-ip-ml-alg}
\end{lmm}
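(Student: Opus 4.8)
The plan is to recognize $\widetilde\Pr(I \in 2^S \mid S)$ as a sum over \emph{all} source-to-sink paths in exactly the directed acyclic graph $G$ built in Theorem~\ref{thm:post-map}, and to evaluate that sum by a single forward dynamic program. By definition $\widetilde\Pr(I \in 2^S \mid S) = \sum_{I \subseteq S} \widetilde\Pr(I \mid S)$, and by Lemma~\ref{lmm:prob-i} each summand factorizes as $P_{k_1}\prod_j Q_{k_j,k_{j+1}}R_{k_{N-K}}$ along the indices of the process events $S\setminus I$. Since choosing $I$ is the same as choosing the increasing index sequence $0 < k_1 < \dots < k_{N-K} < N+1$ of process events, each $I$ is in bijection with a path $e_0 \to e_{k_1} \to \dots \to e_{k_{N-K}} \to e_{N+1}$ in $G$, and $\widetilde\Pr(I\mid S)$ is precisely the product of the edge probability factors ($P$ on the first edge, $Q$ internally, $R$ on the last) along that path. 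Thus the lemma asks for the sum-product over all $e_0\to e_{N+1}$ paths, the exact analogue of the min-sum (shortest-path) computation of Theorem~\ref{thm:post-map}, now over the $(\mathrm{sum},\mathrm{product})$ semiring rather than $(\min,+)$.

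First I would introduce forward accumulators. Set $A_0 = 1$ and, for $1 \le k \le N$,
\begin{equation*}
    A_k = P_k + \sum_{k'=1}^{k-1} A_{k'}\, Q_{k',k},
\end{equation*}
and finally
\begin{equation*}
    \widetilde\Pr(I \in 2^S \mid S) = A_{N+1} = P_{N+1} + \sum_{k'=1}^{N} A_{k'}\, R_{k'}.
\end{equation*}
The isolated term $P_{N+1}$ is the direct edge $e_0 \to e_{N+1}$, corresponding to the degenerate $K = N$ case $I = S$; this is consistent with Lemma~\ref{lmm:prob-i} because $R_0 = 1$, so $P_{N+1}R_0 = P_{N+1}$. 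This recurrence is what Algorithm~\ref{alg:post-ip-ml} evaluates.

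Next I would prove correctness by induction on $k$, establishing the invariant that $A_k$ equals the sum, over all increasing process-index prefixes ending at $e_k$, of the product of factors along $e_0 \to \dots \to e_k$. The base case is the single-event path $e_0 \to e_k$ contributing $P_k$. For the inductive step, every such prefix either is that direct edge or is obtained from a prefix ending at some $e_{k'}$ with $k' < k$ by appending the edge $e_{k'} \to e_k$ of factor $Q_{k',k}$; distributivity of multiplication over the path sum then yields the recurrence exactly. Evaluating the invariant at the sink, together with the path-to-subsequence bijection above, gives $A_{N+1} = \sum_{I\subseteq S}\widetilde\Pr(I\mid S) = \widetilde\Pr(I \in 2^S\mid S)$.

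Finally, for the complexity, computing $A_k$ costs $O(k)$ arithmetic operations, so the whole pass costs $\sum_{k=1}^{N+1} O(k) = \Theta(N^2)$; the factors $P_k$ and $R_k$ each require one tail integral whose lower limit is $t_k - t_s$ or $t_e - t_k$, giving only $O(N)$ distinct integrals to precompute, and each $Q_{k',k}$ needs a single evaluation $f(t_{k_2}-t_{k_1})$, so factor computation does not exceed the $\Theta(N^2)$ of the dynamic program, while the matching lower bound holds because every one of the $\Theta(N^2)$ factors $Q_{k',k}$ is consulted. I expect the main obstacle to be not the dynamic program itself but the careful bookkeeping of the boundary factors $P$ and $R$ and the degenerate empty-process case, so that the path-to-subsequence correspondence is an exact bijection with no double counting; once that is pinned down, correctness is a routine sum-product induction and the runtime bound is immediate.
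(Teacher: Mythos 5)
Your proof is correct and takes essentially the same approach as the paper: your accumulators $A_k$ are exactly the $a_k$ of Algorithm~\ref{alg:post-ip-ml}, your invariant (sum of path products over all process-index prefixes ending at $e_k$) is precisely the paper's statement that $a_k$ is the marginal likelihood of $S_{1:k}$ over $[t_s, t_k]$, and the $\Theta(N^2)$ count matches the paper's counting of executions of the inner-loop line. The explicit subset-to-path bijection, the sum-product/semiring framing relative to Theorem~\ref{thm:post-map}, and the induction merely spell out what the paper's terser proof (which appeals to the reasoning of Lemma~\ref{lmm:prob-i}) leaves implicit, including the boundary case $P_{N+1}R_0 = P_{N+1}$ for $I = S$.
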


\captionof{algorithm}{Marginal likelihood of $S$}
\label{alg:post-ip-ml}
\begin{algorithmic}[1]
    \STATE $k \gets 1$
    \FOR {$k = 1$ \textbf{to} $N$} \label{alg:post-ip-ml-loop-start}
        \STATE $a_k \gets P_{k}$ \label{alg:post-ip-ml-s}
        \FOR {$j = 1$ \textbf{to} $k - 1$}
            \STATE $a_k \gets a_k + a_j Q_{j,k}$ \label{alg:post-ip-ml-j}
        \ENDFOR
    \ENDFOR \label{alg:post-ip-ml-loop-end}
    \STATE $A \gets P_{N+1}$ \label{alg:post-ip-ml-e-start}
    \FOR {$j = 1$ \textbf{to} $N$}
        \STATE $A \gets A + a_jR_j$
    \ENDFOR \label{alg:post-ip-ml-e-end}
    \RETURN $A$
\end{algorithmic}

Being able to compute the unnormalized marginal likelihood, we can
immediately obtain the intrusion
probability~(\ref{eqn:post-ip-bayes-ip}). Marginal
probabilities~(\ref{eqn:post-ip-bayes-mp}) involve the unnormalized
probability of a particular event $e_{k^*}$ not in $I$ given $S$,
which can be computed similarly to the unnormalized marginal likelihood.
However, much of the computation would be reused between
different marginal probabilities; in particular, the
computations for two events $e_{k_1}, e_{k_2}$ are the same
until $\min(k_1, k_2)$.  Theorem~\ref{thm:post-ip-all} gives an
algorithm\footnote{Algorithm~\ref{alg:post-ip-all} bears
similarity to the forward-backward algorithm for Markov
chains~\cite{BPS+70}, but computes marginal probabilities of
occurrence of a node in the sequence rather than of states in
the sequence of nodes.} that computes the intrusion probability and
all posterior probabilities simultaneously, reusing computations.

\begin{thm}
    Algorithm~\ref{alg:post-ip-all} computes $\Pr(I \ne \emptyset | S)$
    and $\Pr(e_k \in I | S)\; \forall k \in \{1, \dots, N\}$ in
    time $\Theta(N^2)$.

    \label{thm:post-ip-all}
\end{thm}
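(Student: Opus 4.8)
The plan is to mirror the structure that already proved Lemma~\ref{lmm:post-ip-ml-alg}, but to augment the forward sweep of Algorithm~\ref{alg:post-ip-ml} with a complementary backward sweep, so that a single pass over all pairs of events suffices to recover every marginal simultaneously. First I would introduce the forward quantities $a_k$ exactly as in Algorithm~\ref{alg:post-ip-ml}: by induction on $k$, $a_k$ equals the sum of $\widetilde\Pr(I|S)$ over all intrusion subsequences $I$ for which $e_k$ is the \emph{last} process event at or before position $k$, i.e. the total unnormalized weight of all paths from $e_0$ to $e_k$ in the graph of Theorem~\ref{thm:post-map}. Symmetrically, I would define backward quantities $b_k$ that aggregate the total weight of all paths from $e_k$ to $e_{N+1}$, computed by the analogous recursion $b_k = R_k + \sum_{j=k+1}^{N} Q_{k,j} b_j$. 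Both recursions are $\Theta(N^2)$ because each fills $N$ entries with an inner loop of length at most $N$.

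The key observation is that the marginal probability decomposes multiplicatively through any fixed event. An event $e_k$ fails to belong to the intrusion precisely when it is a process event, and in that case the factored form~(\ref{eqn:prob-i}) splits at $e_k$ into an initial segment from $e_0$ to $e_k$ and a terminal segment from $e_k$ to $e_{N+1}$, which are independent given that $e_k$ is a process event. Hence $\widetilde\Pr(e_k \notin I | S) = a_k b_k$, and $\widetilde\Pr(I \in 2^S | S) = A = b_0 = a_{N+1}$ is recovered as a byproduct, giving the intrusion probability via~(\ref{eqn:post-ip-bayes-ip}). Substituting into~(\ref{eqn:post-ip-bayes-mp}) yields $\Pr(e_k \in I | S) = 1 - a_k b_k / A$ for every $k$ in a single division per event. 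Thus after the two $\Theta(N^2)$ sweeps, the remaining work is $\Theta(N)$, and the overall bound is $\Theta(N^2)$.

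I would lay the proof out in three steps: (i) state and prove the forward invariant for $a_k$ (this is essentially the correctness of Algorithm~\ref{alg:post-ip-ml}, which I may cite from Lemma~\ref{lmm:post-ip-ml-alg}); (ii) state and prove the backward invariant for $b_k$ by the same inductive argument run in reverse; and (iii) establish the factorization $\widetilde\Pr(e_k \notin I | S) = a_k b_k$ and conclude the complexity. The main obstacle is step (iii): I must argue carefully that summing over \emph{all} intrusion subsequences in which $e_k$ is a process event factors cleanly into the product $a_k b_k$ without double-counting or omitting the boundary cases $k=1$ and $k=N$. This requires showing that fixing $e_k$ as a process event partitions the set of admissible subsequences according to the identity of the process events adjacent to $e_k$ on each side, and that the factor $Q_{k_1,k_2}$ attached to each edge depends only on the two endpoints it connects — a property guaranteed by the factored form~(\ref{eqn:prob-i}) of Lemma~\ref{lmm:prob-i}. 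Once this Markov-like conditional independence is made explicit, the product $a_k b_k$ telescopes correctly and the remaining bookkeeping is routine.
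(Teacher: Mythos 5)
Your proposal is correct and arrives at the same forward--backward architecture as the paper, but the backward half takes a genuinely different route. The paper introduces no separate suffix recursion: it reruns Algorithm~\ref{alg:post-ip-ml} verbatim on time-reversed data ($t'_k = -t_{N-k+1}$, with $t'_s, t'_e = -t_e, -t_s$), justifies this by the Markov property of the renewal process (the sequence likelihood is invariant under time reversal), and then splices the two sweeps at $e_k$ with a correction, setting $\widetilde \Pr(e_k \notin I | S) = a^f_k a^b_k / (1-p_\epsilon)$ on the grounds that the factor for $e_k \notin I$ occurs in both sweeps. Your direct recursion $b_k = R_k + \sum_{j>k} Q_{k,j}\, b_j$ instead accumulates true suffix path weights in the DAG of Theorem~\ref{thm:post-map}, so your splitting identity is exact with no junction correction: in each term of $b_k$ the edge out of $e_k$ carries $e_k$'s factor $(1-p_\epsilon)$ and the terminal factor is the genuine $R$, each counted exactly once, giving $\widetilde\Pr(e_k \notin I|S) = a_k b_k$ in the normalization of Lemma~\ref{lmm:prob-i} (sanity check at $N=1$: $a_1 b_1 = P_1 R_1 = \widetilde\Pr(\emptyset|S)$). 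What the paper's route buys is implementation economy (one algorithm run twice) and the reversibility observation; what yours buys is self-contained path algebra whose boundary bookkeeping is auditable term by term. That auditability is not merely aesthetic: since the reversed run's initial factor $P'$ equals $R/(1-p_\epsilon)$ while the $Q$ weights are reversal-symmetric, one finds $b_k = (1-p_\epsilon)\, a^b_k$, hence $a_k b_k = (1-p_\epsilon)\, a^f_k a^b_k$ rather than $a^f_k a^b_k/(1-p_\epsilon)$; your version agrees with Lemma~\ref{lmm:prob-i} exactly, which suggests the constant in line~\ref{alg:post-ip-all-once} deserves a second look (the discrepancy is constant in $k$, so event rankings are unaffected, but the marginals relative to $\widetilde\Pr(I \in 2^S|S)$ would shift). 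Two small repairs on your side: the claim $A = b_0 = a_{N+1}$ requires the boundary conventions $Q_{0,j} := P_j$ together with the empty-path term $P_{N+1}$, matching lines~\ref{alg:post-ip-ml-e-start}--\ref{alg:post-ip-ml-e-end}; and the intrusion probability~(\ref{eqn:post-ip-bayes-ip}) additionally needs $\widetilde\Pr(I=\emptyset|S)$, a single $\Theta(N)$ path product, which you should state explicitly. Your complexity accounting (two $\Theta(N^2)$ sweeps plus $\Theta(N)$ combination) coincides with the paper's.
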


\captionof{algorithm}{Intrusion probability and marginal probabilities of events}
\label{alg:post-ip-all}
\begin{algorithmic}[1]
    \STATE -{}- {Run Algorithm~\ref{alg:post-ip-ml} forward}
    \STATE $\widetilde \Pr(I\in 2^S|S) \gets$ Algorithm~\ref{alg:post-ip-ml}
    \STATE -{}- {Compute the intrusion probability}
    \STATE $\widetilde \Pr(I = \emptyset|S) \gets$ Equation~\ref{eqn:prob-i}\mbox{\bf{ where }}$I=\emptyset$
    \STATE $\Pr(I \ne \emptyset|S) = 1 - \frac {\widetilde \Pr(I = \emptyset|S)} {\widetilde \Pr(I \in 2^S | S)}$
    \label{alg:post-ip-all-ip}
    \FOR {$k = 1$ \textbf{to} $N$} 
        \STATE $a^f_k \gets a_k$ -{}- {Store $a_i$ from the forward run}
    \ENDFOR
    \STATE -{}- {Compute $S'$, $t'_s$, $t'_e$ by reversing the time}
    \STATE $t'_s, t'_e = -t_e, -t_s$
    \FOR {$k = 1$ \textbf{to} $N$} 
        \STATE $t'_k, y'_k = -t_{N - k + 1}, y_{N - k + 1}$
    \ENDFOR
    \STATE -{}- {Run Algorithm~\ref{alg:post-ip-ml} backward}
    \STATE Algorithm~\ref{alg:post-ip-ml} \textbf{where} $S=S',\,t_s=t'_s,\,t_e=t'_e$
    \FOR {$k = 1$ \textbf{to} $N$} 
        \STATE $a^b_{N - k + 1} \gets a_k$ -{}- {Store $a_i$ from the backward run}
    \ENDFOR
    \STATE -{}- {Compute marginal probabilities}
    \FOR {$k= 1$ \textbf{to} $N$} \label{alg:post-ip-all-loop-start}
        \STATE $\widetilde \Pr(e_k \notin I | S) = \frac {a_k^fa_k^b} {1 - p_\epsilon}$\label{alg:post-ip-all-once}

        \STATE $\Pr(e_k \in I | S) = 1 - \frac {\widetilde \Pr(e_k \notin I | S)} {\widetilde \Pr(I \in 2^S | S)}$ \label{alg:post-ip-all-mp}
    \ENDFOR \label{alg:post-ip-all-loop-end}
    \RETURN $\Pr(I \ne \emptyset | S),\;\Pr(e_k \in I|S)\,\forall k$
\end{algorithmic}

\section{Process Parameters}
\label{sec:param}

The results in Section~\ref{sec:posterior} rely on the process
parameters being known. 
Subsections~\ref{sec:param-past}--\ref{sec:param-bayes} discuss
ways in which the parameters can be estimated.

\subsection{Estimation from Past Data}
\label{sec:param-past}

The most straightforward approach is to estimate the parameters
from the past data under the assumption that the data do not
contain any intrusions.  This assumption is adequate either if 
intrusions are detected and removed from the data, or if they
are rare, such that their influence on estimation of the process
parameters is negligible.

\subsection{Maximum Likelihood Estimation by Ex\-pect\-ation-Maxi\-mization}
\label{sec:param-em}

The process parameters can be chosen to maximize the likelihood
of the MAP subsequence. This yields an expectation-maximization
(EM) algorithm (Algorithm~\ref{alg:param-em}) alternating between
finding the MAP subsequence $I_{MAP}$ of intrusion events and
estimating parameters from the remaining subsequence $S
\setminus I_{MAP}$.

\captionof{algorithm}{Estimating process parameters by expectation-maximization}
\label{alg:param-em}
\begin{algorithmic}[1]
    \STATE $I_{MAP}^{prev} \gets \emptyset$ \label{alg:param-em-init}, $i \gets 1$
    \LOOP
        \STATE \underline{M step:} estimate $\theta_F, \theta_G$ from $S \setminus I_{MAP}^{prev}$ \label{alg:param-em-estimate}
        \IF {$i = N_{iter}$}
            \STATE \textbf{break} \label{alg:param-em-maxiter}
        \ENDIF
        \STATE \underline{E step:} compute $I_{MAP}$
        \IF {$I_{MAP} = I_{MAP}^{prev}$}
            \STATE \textbf{break} \label{alg:param-em-stuck}
        \ELSIF  {$|I_{MAP}| > K_{max}$}
            \STATE \textbf{break} \label{alg:param-em-kmax}
        \ENDIF
        \STATE $I_{MAP}^{prev} \gets I_{MAP}$, $i \gets i + 1 $
    \ENDLOOP
    \RETURN $\theta_F, \theta_G$
\end{algorithmic}

The initial parameter values are set under the assumption that
there is no intrusion, i.e. from the whole sequence $S$
(line~\ref{alg:param-em-init}).  Given a sequence of events, the
parameters are estimated as in Subsection~\ref{sec:param-past}
(line~\ref{alg:param-em-estimate}).  The algorithm terminates
either when $I_{MAP}$ stays the same in two subsequent
iterations (line~\ref{alg:param-em-stuck}), thus reaching a
fixed point, or after a pre-defined maximum number of iterations
$N_{iter}$ (line~\ref{alg:param-em-maxiter}).

A pitfall of this EM scheme is that the process parameters
cannot be estimated reliably if $S \setminus I_{MAP}$ becomes
too small.  Hence, the algorithm must also be interrupted when
the size of $I_{MAP}$ exceeds a certain
threshold $K_{max}$ (line~\ref{alg:param-em-kmax}).

There is no general guarantee that an EM algorithm converges to
the global maximum~\cite{W83}. In practice, however,
Algorithm~\ref{alg:param-em} works well for sufficiently small
values of $p_\epsilon$ (see Section~\ref{sec:empirical} for empirical
evidence), which is often the case in intrusion detection
applications. 

\subsection{Bayesian Inference of Posterior Distribution of Parameters}
\label{sec:param-bayes}

In the Bayesian setting, a prior can be imposed upon the process
parameters.  The posterior inference is performed on the joint
distribution of the process parameters conditioned on the
marginal likelihood of $S$ (Algorithm~\ref{alg:post-ip-ml}). A
drawback of this approach is that the inference may be too
expensive computationally. As the problem of detecting
intrusions in online event streams often arises in settings that
require fast response, maximum-likelihood estimation from past
data (Subsection~\ref{sec:param-past}) or from the given event
sequence (Subsection~\ref{sec:param-em}) may be a better choice.

\section{Empirical Evaluation}
\label{sec:empirical}

In the case studies that follow we evaluate the algorithms of
Sections~\ref{sec:posterior} and~\ref{sec:param} on both
synthetic and real-world data. Evaluation on synthetic data
provides an evidence that the algorithms work on data generated
by a renewal process. Evaluation on
real-world data examines performance of the algorithms when the
properties of the generating process are unknown, as well as
assesses their applicability to practical intrusion detection.

The data, the algorithms, and the code to run the experiments
are available at \url{https://github.com/dtolpin/rmi-case-studies}.

\subsection{Evaluation on Synthetic Data}

We generate data from a renewal process with Gamma-distributed
interarrival intervals for shapes 1, 2, 4, and 8.  The dataset
is balanced so that a half of the dataset entries contains an
intrusion. Intrusion events are uniformly distributed over a
subinterval of each entry with intrusion, chosen uniformly with
average length of $\frac 1 3$ of the total entry duration.
10\,000 entries of 20 events are generated for each intrusion probability. 

\begin{figure}[h]
    \centering
    \includegraphics[width=0.8\textwidth]{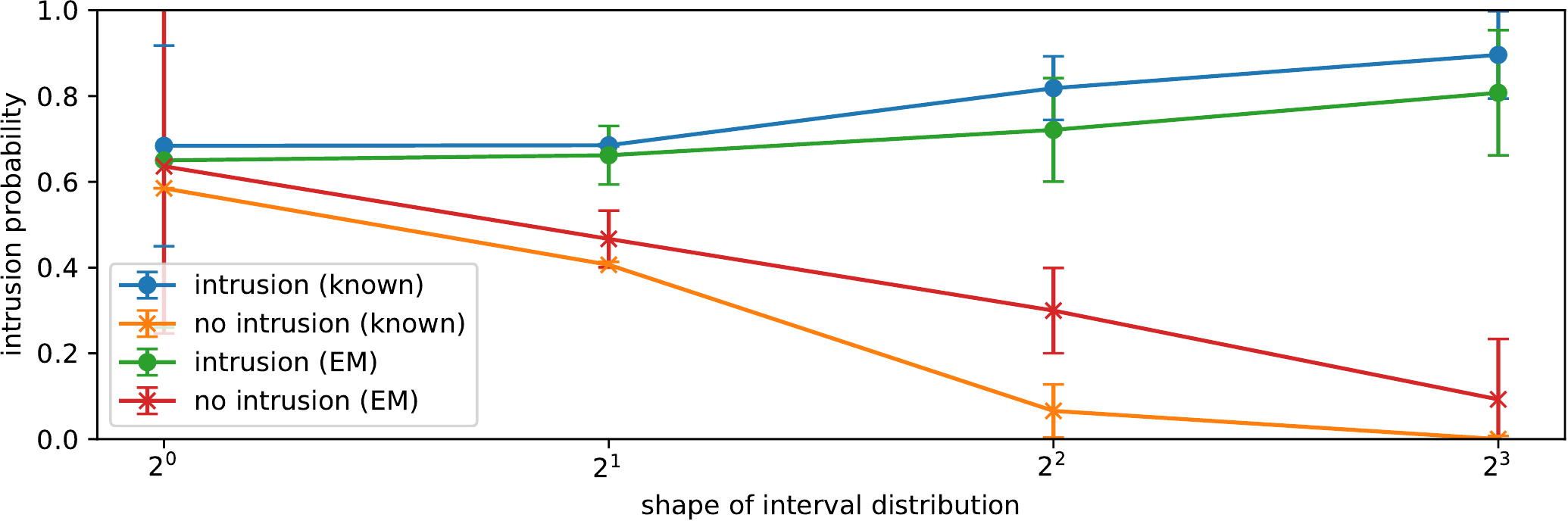}
	\caption{Posterior intrusion probability. The error
    bars are for two standard deviations.}
    \label{fig:synthetic-alert-score}
\end{figure}
Figure~\ref{fig:synthetic-alert-score} shows average posterior
intrusion probability as a function of the shape of interval
distribution for both negative and positive entries. The
probability is computed either for known process parameters, or
for parameters estimated through the EM algorithm
(Section~\ref{sec:param-em}).  The posterior intrusion
probability for positive and negative samples differs
sufficiently for shapes greater than 1 to reliably distinguish
between samples with and without intrusion in both. 

Shape 1 corresponds to the Poisson process. In  a Poisson process the
joint density of a sequence of events is independent of
intermediate intervals given the interval between the last and
the first event. Hence, intrusion probabilities in positive and
negative samples are close to 0.5 and to each other. 

\begin{figure}[h]
    \centering
    \includegraphics[width=0.8\textwidth]{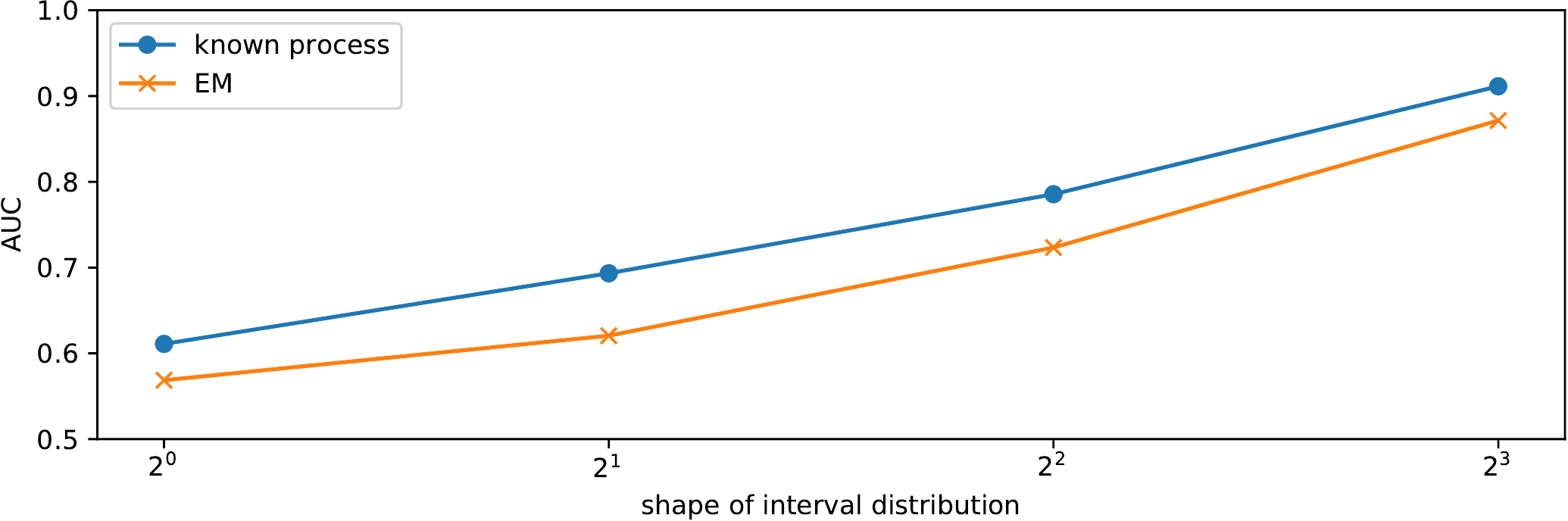}
    \caption{Per-entry area under the ROC curve of intrusion.}
    \label{fig:synthetic-auc-per-entry}
\end{figure}
Figure~\ref{fig:synthetic-auc-per-entry} shows \textit{area
under the ROC curve} (AUC) as a function of the prior intrusion
probability for each combination of data and algorithm
parameters. AUC reflects the classification accuracy for all
combinations of false negative and false positive rates. When
interarrival intervals are used for intrusion detection, for
both known process parameters and parameters estimated by the EM
algorithm, AUC stays above 0.6 for shapes greater than 1, with
the highest values of $\approx 0.9$.

\begin{figure}[h]
    \centering
    \includegraphics[width=0.8\textwidth]{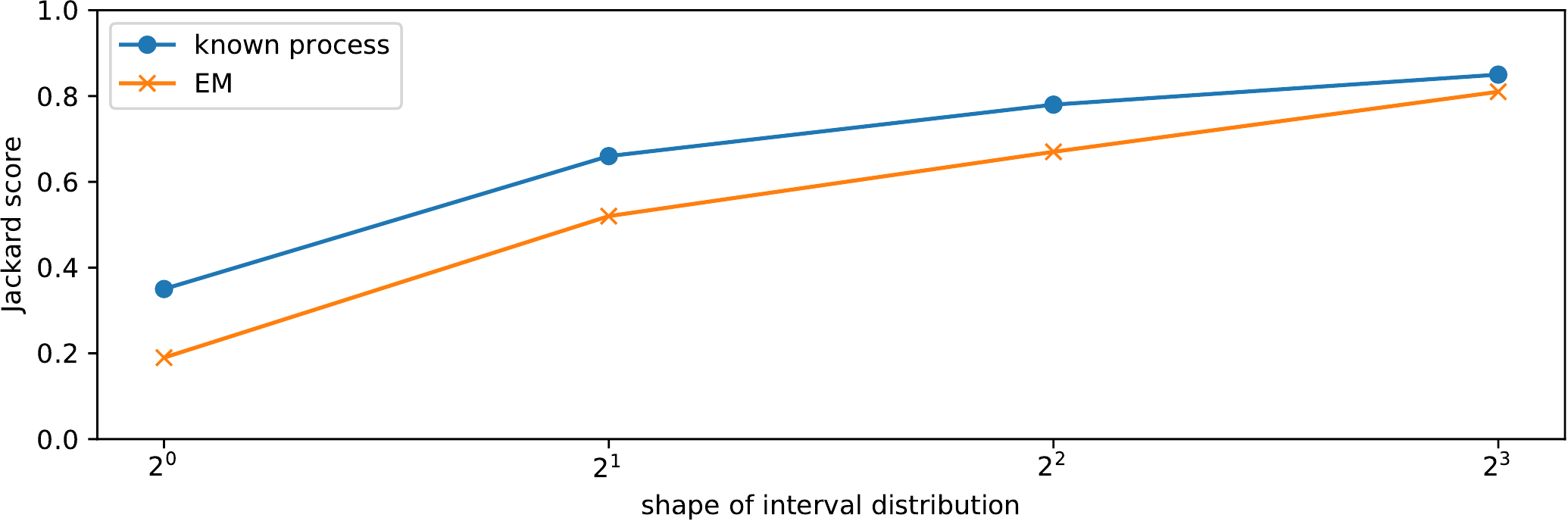}
    \caption{Jaccard similarity score.}
    \label{fig:synthetic-jaccard-score}
\end{figure}
Figure~\ref{fig:synthetic-jaccard-score} shows average
Jaccard similarity score between the MAP intrusion subsequence
(Section~\ref{sec:post-map}) and the actual intrusion. Jackard
similarity score stays above $\approx 0.5$ for shapes greater
than 1. The score is low for shape 1, because the Poisson process 
implies that in the presence of an intrusion any subsequence of
intermediate events of given size has the same probability to
belong to the intrusion.

\subsection{Evaluation on Anonymized Real-World Data}

We obtained anonymized data from an online payment system,
consisting of 1000 log fragments. The data contains time stamps
and amounts of payments. The
data is anonymized in the following way. Each entry (log
fragment) contains 50 events. The event times are rescaled
so that the events fall within interval $[0, 1]$. The payment
amounts are normalized to have the mean of 1.

The renewal process model of intrusion is straightforwardly
extended to a renewal process with independent marks by multiplying
each term in~(\ref{lmm:prob-i}) by the corresponding mark
density. We evaluate intrusion detection based on intervals
alone, marks alone as a baseline, and marks and intervals combined.

Neither parameters of normal processes generating the events nor
the prior intrusion probability are known.  To estimate the
prior intrusion probability, we split the dataset into the
training (20\%) and test (80\%) datasets.  We choose the
probability to maximize AUC on the training dataset, and then
run the inference on the training dataset. For both training and
test dataset, we estimate process parameters with the EM
algorithm (Section~\ref{sec:param-em}).

\begin{center}
{\small
\setlength\tabcolsep{4pt}
\begin{tabular}{l | c | c | c}
                 & Intervals  & Marks   & Marks and intervals \\ \hline
 AUC             & 0.691     & 0.584      & 0.733 \\
 Jaccard score   & 0.686     & 0.532      & 0.725
\end{tabular}}
\vspace{-6pt}
\end{center}
\captionof{table}{Metrics of intrusion detection in anonymized
    payment data. Marks only yield much lower accuracy than
    either intervals and marks or intervals only.}
\label{tbl:anonymized}

\vspace{6pt}

Intrusion detection metrics on the test dataset are shown in
Table~\ref{tbl:anonymized}. Detection based on marks only serves
as a comparison baseline. While marks
alone provide some information about intrusion, the detection
accuracy is much higher when interarrival intervals are taken
into account through the renewal process model.

\begin{figure}[h]
    \centering
    \includegraphics[width=0.8\textwidth]{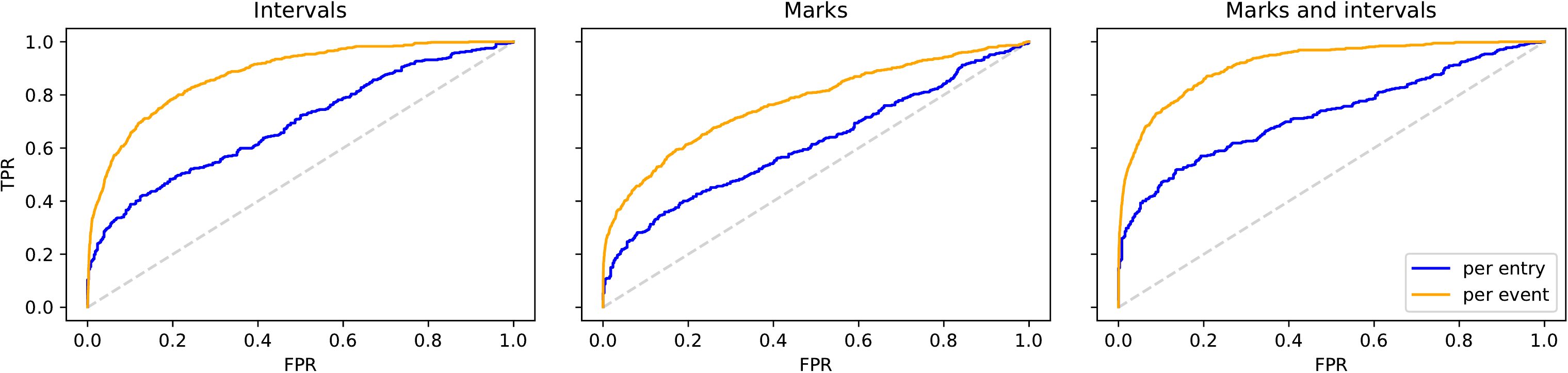}
	\caption{The ROC curve of intrusion detection in anonymized
	payment data.}
	\label{fig:anonymized-roc}
\end{figure}
Figure~\ref{fig:anonymized-roc} shows the ROC curves of intrusion
detection on the test dataset. According to the curves, when the
renewal process is used for detection, $\approx
70\%$ of entries with intrusion are among $\approx 30\%$ topmost
entries ordered by posterior intrusion probability. $\approx
80\%$ of intrusion events are among $\approx 10\%$ topmost
events ordered by posterior marginal probability of belonging to
an intrusion. Compared to that, when the detection is based on
marks alone, the intrusion detection accuracy is only slightly
better than random guess.

\section{Discussion}

We introduced a probabilistic generative model for inference
about intrusions in renewal processes. Posterior
inference in this model can be performed in polynomial time to
obtain the posterior intrusion probability, the marginal
probability of each event to belong to an intrusion, and a MAP 
subsequence of intrusion events. When process parameters are
unknown, they can be efficiently estimated using an
expectation-maximization algorithm.

We evaluated the inference algorithms, including parameter
estimation, on both synthetic and anonymized real-world data.
In both cases the inference algorithms yielded results
suggesting their suitability for intrusion detection. Due to
low runtime complexity, the algorithms suit well online
applications, such as fraud detection in online payment systems.

Application of the algorithms is based on the assumption that
the process generating normal events is sufficiently well
described by a renewal process. Evaluation on the
anonymized real-world data from an online payment system 
supports feasibility of this model. However, one may envision
cases where renewal process is inadequate, such that when
multiple past events affect the distribution of future event
times and marks. In such cases, a model based on interacting
point processes, in particular on the Hawkes process, should
be considered; however, exact or approximate inference
algorithms in such a model may have higher computational 
complexity. On the other hand, if the event series are well 
described by a Poisson process, intrusions cannot be
reliably identified based on interarrival intervals.  


\bibliographystyle{plain}
\bibliography{refs}

\begin{thebibliography}{10}

\bibitem{BPS+70}
Leonard~E. Baum, Ted Petrie, George Soules, and Norman Weiss.
\newblock A maximization technique occurring in the statistical analysis of
  probabilistic functions of {M}arkov chains.
\newblock {\em The Annals of Mathematical Statistics}, 41(1):164--171, 1970.

\bibitem{CBK12}
Varun Chandola, Arindam Banerjee, and Vipin Kumar.
\newblock Anomaly detection for discrete sequences: A survey.
\newblock {\em IEEE Trans. on Knowl. and Data Eng.}, 24(5):823--839, May 2012.

\bibitem{CM12}
V.~Chavez-Demoulin and J.A. McGill.
\newblock High-frequency financial data modeling using {H}awkes processes.
\newblock {\em Journal of Banking \& Finance}, 36(12):3415 -- 3426, 2012.
\newblock Systemic risk, Basel III, global financial stability and regulation.

\bibitem{CSR+01}
Thomas~H. Cormen, Clifford Stein, Ronald~L. Rivest, and Charles~E. Leiserson.
\newblock {\em Introduction to Algorithms}.
\newblock McGraw-Hill Higher Education, 2nd edition, 2001.

\bibitem{C70}
D.R. Cox.
\newblock {\em Renewal Theory}.
\newblock Methuen science paperbacks. Methuen, 1970.

\bibitem{E84}
B.~S. Everitt.
\newblock {\em An Introduction to Latent Variable Models}.
\newblock Monographs on Statistics and Applied Probability. Springer
  Netherlands, 1984.

\bibitem{G14}
Robert~G. Gallager.
\newblock {\em {Stochastic Processes: Theory for Applications}}.
\newblock Cambridge University Press, 2014.

\bibitem{GGA+14}
Manish Gupta, Jing Gao, Charu Aggarwal, and Jiawei Han.
\newblock {\em Outlier Detection for Temporal Data}.
\newblock Morgan \& Claypool Publishers, 2014.

\bibitem{L98}
David Lando.
\newblock On {C}ox processes and credit risky securities.
\newblock {\em Review of Derivatives Research}, 2(2):99--120, Dec 1998.

\bibitem{LM03}
Filip Lindskog and Alexander~J. McNeil.
\newblock Common {P}oisson shock models: Applications to insurance and credit
  risk modelling.
\newblock {\em ASTIN Bulletin}, 33(2):209–238, 2003.

\bibitem{MS03}
Markos Markou and Sameer Singh.
\newblock Novelty detection: A review --- part 1: Statistical approaches.
\newblock {\em Signal Process.}, 83(12):2481--2497, December 2003.

\bibitem{O88}
Yosihiko Ogata.
\newblock Statistical models for earthquake occurrences and residual analysis
  for point processes.
\newblock {\em Journal of the American Statistical Association}, 83(401):9+,
  March 1988.

\bibitem{PER00}
William~D. Penny, Richard~M. Everson, and Stephen~J. Roberts.
\newblock Hidden {M}arkov independent component analysis.
\newblock In Mark Girolami, editor, {\em Advances in Independent Component
  Analysis}, pages 3--22. Springer London, London, 2000.

\bibitem{PCC+14}
Marco A.~F. Pimentel, David~A. Clifton, Lei Clifton, and Lionel Tarassenko.
\newblock Review: A review of novelty detection.
\newblock {\em Signal Process.}, 99:215--249, June 2014.

\bibitem{RG99}
Sam Roweis and Zoubin Ghahramani.
\newblock A unifying review of linear {G}aussian models.
\newblock {\em Neural Comput.}, 11(2):305--345, February 1999.

\bibitem{S09}
Z.~Schuss.
\newblock {\em Theory and Applications of Stochastic Processes: An Analytical
  Approach}.
\newblock Applied Mathematical Sciences. Springer New York, 2009.

\bibitem{SJ10}
Aleksandr Simma and Michael~I. Jordan.
\newblock Modeling events with cascades of {P}oisson processes.
\newblock In {\em Proceedings of the Twenty-Sixth Conference on Uncertainty in
  Artificial Intelligence}, UAI'10, pages 546--555, Arlington, Virginia, United
  States, 2010. AUAI Press.

\bibitem{SP13}
Dan Stowell and Mark~D. Plumbley.
\newblock Segregating event streams and noise with a {M}arkov renewal process
  model.
\newblock {\em Journal of Machine Learning Research}, 14:2213--2238, 2013.

\bibitem{TA06}
Curtis~L Tomasevicz and Sohrab Asgarpoor.
\newblock Preventive maintenance using continuous-time semi-{M}arkov processes.
\newblock In {\em North American Power Symposium}, pages 3--8. IEEE, 2006.

\bibitem{WP96}
Hongzhou Wang and Hoang Pham.
\newblock A quasi renewal process and its applications in imperfect
  maintenance.
\newblock {\em International Journal of Systems Science}, 27(10):1055--1062,
  1996.

\bibitem{W83}
C.~F.~J. Wu.
\newblock On the convergence properties of the {EM} algorithm.
\newblock {\em The Annals of Statistics}, 11(1):95--103, 1983.

\bibitem{XPS11}
Liang Xiong, Barnab\'{a}s P\'{o}czos, and Jeff Schneider.
\newblock Group anomaly detection using flexible genre models.
\newblock In {\em Proceedings of the 24th International Conference on Neural
  Information Processing Systems}, NIPS'11, pages 1071--1079, USA, 2011. Curran
  Associates Inc.

\bibitem{XPS+11}
Liang Xiong, Barnabás Póczos, Jeff~G. Schneider, Andrew Connolly, and Jake
  Vanderplas.
\newblock Hierarchical probabilistic models for group anomaly detection.
\newblock In Geoffrey~J. Gordon and David~B. Dunson, editors, {\em Proceedings
  of the Fourteenth International Conference on Artificial Intelligence and
  Statistics (AISTATS-11)}, volume~15, pages 789--797. Journal of Machine
  Learning Research - Workshop and Conference Proceedings, 2011.

\bibitem{Y20}
Nong Ye.
\newblock A {M}arkov chain model of temporal behavior for anomaly detection.
\newblock In {\em IEEE Workshop on Information Assurance and Security}, pages
  171--174, 2000.

\end{thebibliography}

\clearpage

\section{PROOFS}

Lemma~\ref{lmm:prob-i}

\begin{proof}
    To define $\Pr(I|S)$ for any $I \subset S$ on the same reference
    probability measure, we extend $S$ by $K$ unboserved non-intrusion events
    $\overline S$ so that $S \circ \overline S$ contains exactly $N$
    non-intrusion events. Since intervals are mutually independent,
    the expected joint probability density of the unobserved events is the product of
    expected probability densities of each interval:
    \begin{equation}
        \E(\widetilde \Pr(\overline S)) = \E_{\tau \sim F}[f(\tau)]^{K}
       \label{eqn:e-prob-hat-s}
    \end{equation}
    $\widetilde \Pr(I|S)$ is computed as the product of
    probability densities of each transition in $S \setminus I$
    and of expected joint probability density of the unobserved events:
    \begin{align}
        \label{eqn:prob-i-tag}
         \widetilde \Pr&(I| S) = P'_{k_1} \!\!\!\!\! \prod_{j=1}^{N-K-1}\!\!\!\!\!Q'_{k_j, k_{j+1}} R'_{k_{N-K}} \E(\widetilde \Pr(\overline S)), \\ \nonumber
        \mbox  {where}& \\ \nonumber
         P_k &= \cdot
        \begin{cases}
            p_\epsilon^N  \frac {\int_{T}^\infty (\tau - T)f(\tau)^2 d\tau}{\int_{T}^\infty (\tau - T)f(\tau) d\tau} & \mathbf{if}\; k = N + 1, \\
            p_\epsilon^{k - 1} \frac {\int_{t_k - t_s}^\infty f(\tau)^2d\tau} {\int_{t_k - t_s}^\infty f(\tau)d\tau} & \mathbf{otherwise.}
        \end{cases} \\ \nonumber
         Q_{k_1,k_2}& = \cdot (1-p_\epsilon)p_\epsilon^{k_2 - k_1 - 1} {f(t_{k_2} - t_{k_1})}, \\ \nonumber
         R_k& = \cdot 
        \begin{cases}
            1 & \mathbf{if}\; k = 0, \\
            (1-p_\epsilon)p_\epsilon^{N - k} 
            \frac {\int_{t_e - t_k}^\infty f(\tau)^2d\tau}{\int_{t_e - t_k}^\infty f(\tau)d\tau} & \mathbf{otherwise.}
        \end{cases}
    \end{align}
    Here, $P_{N+1}$ accounts for the case when all events in $S$ are intrusion
    events. The probability density of a randomly chosen interarrival interval
    of duration $\Delta t$ is $\E_{\tau \sim F}[\tau]^{-1} \Delta t f(\Delta t)$ (known
    as `observation paradox'~\cite{C70}). The probability density of an
    interarrival interval of duration $\Delta t$ covering $[t_s, t_e]$ is
    $\propto (\Delta t - T)f(\Delta t)^2$ for $\Delta t >= T$, 0 otherwise.
    $P_k$ and $R_k$ for $k \in \{1, \dots, N\}$ account for intervals from
    first and last events in $S$ to the corresponding extra events. $Q_{k_1,
    k_2}$ account for intervals between events in $S$. 
    
    $\widetilde \Pr(I|S)$ is unnormalized, hence can be scaled by any
    factor that does not depend on $I$ or $S$.  Equation
    (\ref{eqn:prob-i}) is obtained as
    \begin{equation}
        \widetilde \Pr(I|S) \gets \frac {\widetilde \Pr(I|S)}
                                        {\E_{\tau \sim F}[f(\tau)]^{N + 1}}
        \label{eqn:prob-i-scaled}
     \end{equation}
\end{proof}

Theorem~\ref{thm:post-map}

\begin{proof} According to (\ref{eqn:prob-i}),
    \begin{equation}
        - \log \widetilde \Pr(I|S) = w_{0, k_1} + \sum_{j=1}^{N-K-1} w_{k_j, k_{j+1}} + w_{k_{N-k}, N+1}
        \label{eqn:post-map-proof-a}
    \end{equation}
    which is also the length of a path from $e_0$ to $e_{N+1}$.
    $\log(\cdot)$ is monotonically increasing,
    hence minimizing (\ref{eqn:post-map-proof-a}) computes
    $I_{MAP}$ through maximizing the posterior probability
    $\widetilde \Pr(I|S)$.
\end{proof}

\begin{crl}
    Provided that the probability density of $F(\theta_F)$ can
    be computed in fixed time, a MAP subsequence of intrusion
    events can be computed in time $\Theta(N^2)$.
    \label{crl:post-map-complexity}
\end{crl}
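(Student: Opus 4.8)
The plan is to read the corollary off Theorem~\ref{thm:post-map}, which already recasts the search for $I_{MAP}$ as a single-source shortest-path computation from $e_0$ to $e_{N+1}$ in the weighted DAG $G=(V,E)$. The first thing I would record is the size of this graph: $V$ contains $N+2$ vertices and, since $E$ carries one edge for every ordered pair $(e_{k_1},e_{k_2})$ with $k_2>k_1$, we have $|E|=\binom{N+2}{2}=\Theta(N^2)$. This edge count is what will force the matching $\Omega(N^2)$ lower bound on the work performed, so the whole argument reduces to exhibiting an algorithm whose cost is $O(N^2)$.

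Next I would account for the cost of producing the edge weights of~(\ref{eqn:post-map-weights}). The weights are of three kinds: the $N+1$ source weights $-\log P_{k_2}$, the $N+1$ sink weights $-\log R_{k_1}$, and the $\Theta(N^2)$ internal weights $-\log Q_{k_1,k_2}$. Each $Q_{k_1,k_2}$ requires a single evaluation $f(t_{k_2}-t_{k_1})$ together with a power $p_\epsilon^{k_2-k_1-1}$ and the constant factor $\E_{\tau\sim F}[f(\tau)]^{-1}$; under the hypothesis that $f$ is evaluable in fixed time, and with the powers $p_\epsilon^0,\dots,p_\epsilon^N$ and the normalizing constant tabulated once in $O(N)$ time, each internal weight costs $O(1)$, for $\Theta(N^2)$ in total. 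The source and sink weights $P_k,R_k$ involve the integral ratios appearing in Lemma~\ref{lmm:prob-i}; there are only $O(N)$ of them, so they do not dominate as long as each ratio can itself be obtained in fixed time.

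I would then invoke the standard shortest-path routine for a DAG: because the vertices $e_0<e_1<\cdots<e_{N+1}$ are already in topological order, a single dynamic-programming sweep that relaxes each outgoing edge once computes the shortest distance, together with a predecessor pointer, to $e_{N+1}$ in $\Theta(|V|+|E|)=\Theta(N^2)$ time. Recovering $I_{MAP}$ is then an $O(N)$ back-trace along predecessor pointers with the endpoints $e_0,e_{N+1}$ dropped. Combining the $\Theta(N^2)$ weight construction, the $\Theta(N^2)$ dynamic program, and the matching $\Omega(N^2)$ edge-count bound pins the running time at $\Theta(N^2)$.

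The step I expect to require the most care is reconciling the ``density computable in fixed time'' hypothesis with the integral-ratio factors in $P_k$ and $R_k$: the hypothesis as stated concerns $f$ itself, not the tail integrals $\int_\cdot^\infty f(\tau)\,d\tau$ and $\int_\cdot^\infty f(\tau)^2\,d\tau$. I would therefore make explicit that these ratios are treated as constant-time oracle evaluations, available in closed form for the standard distributions used or by a fixed-precision quadrature, since otherwise only the $Q$ factors are covered by the stated assumption. The remainder is the routine accounting above, so this reconciliation is the only non-mechanical point in the proof.
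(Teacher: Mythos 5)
Your proposal is correct and follows essentially the same route as the paper's own proof: $\Theta(N^2)$ for constructing the edge weights of~(\ref{eqn:post-map-weights}), then the standard $\Theta(|V|+|E|)$ shortest-path computation in a DAG with $|V|=N+2$ and $|E|=\binom{N+2}{2}=\Theta(N^2)$. Your closing caveat --- that the fixed-time hypothesis on $f$ does not literally cover the tail-integral ratios in $P_k$ and $R_k$, which must be treated as constant-time oracles --- is a legitimate point that the paper's proof silently glosses over, and it strengthens rather than alters the argument.
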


\begin{proof}
    Constructing $G_I$ requires $\Theta(N^2)$ for computing the
    edge weights according to (\ref{eqn:post-map-weights}).  The
    shortest path between two vertices in a directed acyclic
    graph $G=(V,E)$ can be computed in $\Theta(|V| + |E|)$ time
    ~\cite[Section 24.2]{CSR+01}. In $G_I$, $|V_I|=N+2$, $|E_I|=\frac
    {|V_I|(|V_I|-1)} 2$, hence a shortest path in $G_I$ can be
    found in $\Theta(N^2)$. Hence, the total computation time of
    $I_{MAP}$ is $\Theta(N^2)$.
\end{proof}

Lemma~\ref{lmm:post-ip-ml-alg}

\begin{proof} 
    The proof uses similar reasoning to the proof of
    Lemma~\ref{lmm:prob-i}. Any event $e_k$ belonging to the process
    can be reached from any event $e_j$ preceding it, $0 \le j <
    k$.  Line~\ref{alg:post-ip-ml-s} accounts for transitions from the extra
    event at the beginning to $e_k$.  Line~\ref{alg:post-ip-ml-j} --- for
    transitions from earlier events in $S \setminus I$ to $e_k$.  $a_k$,
    $\forall k \in \{1, \dots, N\}$, are the marginal likelihoods of
    subsequences $S_{1:k}$ over time intervals $[t_s, t_k]$.  Similarly,
    lines~\ref{alg:post-ip-ml-e-start}--\ref{alg:post-ip-ml-e-end} account for
    transitions from any event to the extra event at the end. $A$ is the
    marginal likelihood of $S$ over time interval $[t_s, t_e]$.

    The running time is dominated by the nested loop in
    lines~\ref{alg:post-ip-ml-loop-start}--\ref{alg:post-ip-ml-loop-end}.
    Line~\ref{alg:post-ip-ml-j} in the loop is executed
    $\frac {N(N-1)} 2$ times. Hence, the algorithm runs in time
    $\Theta(N^2)$.
\end{proof}

Theorem~\ref{thm:post-ip-all}

\begin{proof}
    A renewal process is a Markov process: the arrival time of
    an event is independent of earlier events given the last
    event. Consequently, the likelihood of a sequence stays the
    same if the times of events and the interval bounds
    are reversed. Therefore, $a_k^f$, $\forall k \in \{1, \dots,
    N\}$, are the marginal likelihoods of subsequences~$S_{1:k}$
    over time intervals~$[t_s, t_k]$, and $a_k^b$ are the
    marginal likelihoods of $S_{k:N+1}$ over~$[t_k, t_e]$.  The
    loop in
    lines~\ref{alg:post-ip-all-loop-start}--\ref{alg:post-ip-all-loop-end}
    computes the marginal probabilities of events in $S$ to
    belong to the intrusion. Line~\ref{alg:post-ip-all-once}
    computes the probability of $S$ with $e_k \notin I$ by
    multiplying $a_k^f$ and $a_k^b$ and dividing by the
    probability of $e_k \notin I$ independently of other events,
    because this probability appears twice, both in $a_k^f$ and
    in $a_k^b$.  The expressions for returned values in
    lines~\ref{alg:post-ip-all-ip} and~\ref{alg:post-ip-all-mp}
    are due to (\ref{eqn:post-ip-bayes-ip})
    and~(\ref{eqn:post-ip-bayes-mp}).

    Algorithm~\ref{alg:post-ip-ml} runs in time $\Theta(N^2)$
    and is called twice. The rest of the algorithm runs in time
    $\Theta(N)$. Hence, the algorithm runs in time
    $\Theta(N^2)$. 
\end{proof}

\end{document}